\documentclass[letterpaper, 10 pt, conference]{ieeeconf}
\IEEEoverridecommandlockouts

\usepackage[margin=0.75in]{geometry}

\usepackage[compress]{cite}
\makeatletter
\let\NAT@parse\undefined
\makeatother
\usepackage[colorlinks=true]{hyperref}

\usepackage[table]{xcolor} 
\usepackage[section]{placeins} 

\usepackage{pifont}  
\usepackage{graphics} 
\usepackage{amsmath} 
\usepackage{mathtools} 
\usepackage{amssymb}  
\usepackage{makeidx} 
\usepackage{graphicx} 
\graphicspath{{figures/}}
\usepackage{multicol} 
\usepackage[bottom]{footmisc} 
\usepackage[utf8]{inputenc} 
\usepackage{multirow}
\usepackage{booktabs} 
\usepackage{adjustbox}
\usepackage{caption}
\usepackage{makecell} 
\usepackage{tikz}
\usetikzlibrary{positioning}
\usetikzlibrary{shapes,snakes}
\usepackage{bm} 
\usepackage{float} 
\usepackage{color} 
\usepackage{empheq} 
\usepackage{threeparttable} 
\usepackage{graphicx}

\usepackage{tikz}

\definecolor{mypink2}{RGB}{139,0,139}
\definecolor{mygreen}{RGB}{0,110,51}

\usepackage[a-1b]{pdfx}

\usepackage{xurl} 
\usepackage{todonotes}

\usepackage{paralist}
\newtheorem{thm}{Theorem}

\newtheorem{lem}[thm]{Lemma}

\newcommand\scalemath[2]{\scalebox{#1}{\mbox{\ensuremath{\displaystyle #2}}}}

\usepackage{mathtools}
\DeclarePairedDelimiterX{\norm}[1]{\lVert}{\rVert}{#1}

\newcommand{\eg}{\textit{e}.\textit{g}.}

\newcommand{\ie}{\textit{i}.\textit{e}.}

\DeclareGraphicsExtensions{.eps,.pdf,.png,.jpg,.JPG,.gif}

\begin{document}
%

\title{\LARGE \bf Consistent and Efficient MSCKF-based LiDAR-Inertial Odometry with Inferred Cluster-to-Plane Constraints for UAVs}
%
\author{Jinwen Zhu$^{\dag*}$, 
        Xudong Zhao$^{\dag*}$,
        Fangcheng Zhu$^\dag$,
        Jun Hu$^\dag$,
        Shi Jin$^\dag$, 
        Yinian Mao$^\dag$, 
        Guoquan Huang$^\ddag$
\thanks{$^{*}$Equal contribution.}
\thanks{$\dag$Meituan UAV, Beijing, China (e-mail: \{zhaoxudong09, zhujinwen, zhufangcheng, hujun11, jinshi02, maoyinian\}@meituan.com).}
 \thanks{ $\ddag$Dept. of Mechanical Engineering, Computer and Information Sciences, University of Delaware, Newark, DE (email: ghuang@udel.edu).}
}
\maketitle

\begin{abstract}
Robust and accurate navigation is critical for Unmanned Aerial Vehicles (UAVs) especially for those with stringent Size, Weight, and Power (SWaP) constraints. However, most state-of-the-art (SOTA) LiDAR-Inertial Odometry (LIO) systems still suffer from estimation inconsistency and computational bottlenecks when deployed on such platforms. To address these issues, this paper proposes a consistent and efficient tightly-coupled LIO framework tailored for UAVs. Within the efficient Multi-State Constraint Kalman Filter (MSCKF) framework, we build coplanar constraints inferred from planar features observed across a sliding window. By applying null-space projection to sliding-window coplanar constraints, we eliminate the direct dependency on feature parameters in the state vector, thereby mitigating overconfidence and improving consistency. More importantly, to further boost the efficiency, we introduce a parallel voxel-based data association and a novel compact cluster-to-plane measurement model. This compact measurement model losslessly reduces observation dimensionality and significantly accelerating the update process. Extensive evaluations demonstrate that our method outperforms most state-of-the-art (SOTA) approaches by providing a superior balance of consistency and efficiency. It exhibits improved robustness in degenerate scenarios, achieves the lowest memory usage via its map-free nature, and runs in real-time on resource-constrained embedded platforms (\eg, NVIDIA Jetson TX2).
\end{abstract}

\section{Introduction}
\label{sec:intro}
Robust, accurate, and efficient state estimation is prerequisite for Unmanned Aerial Vehicles (UAVs), particularly in safety-critical applications such as commercial drone delivery~\cite{faaPackageDelivery}. Valued for its immunity to lighting variations and precise ranging, LiDAR has become a primary sensor for autonomous aerial navigation. While LiDAR provides precise ranging, deploying LiDAR-Inertial Odometry (LIO) on UAVs is challenged by stringent Size, Weight, and Power (SWaP) constraints.
Currently, the Error-State Iterated Kalman Filter (ESIKF) framework (\eg, FAST-LIO2 \cite{xu2022fast}, Super-LIO \cite{wang2026super}) dominates due to its impressive efficiency. However, these methods fundamentally rely on a global map treated as a fixed prior. This architecture theoretically presents a critical  limitation: \textbf{inconsistency}. 
By treating the estimated map as a fixed prior with independent noise, these approaches neglect the cross-correlations between the current state and the map. This leads to ``over-confidence'', where the estimated uncertainty is significantly lower than the actual error~\cite{yuan2022efficient}. 

In contrast, the Multi-State Constraint Kalman Filter (MSCKF) framework~\cite{mourikis2007multi}---along with consistent treatments such as First-Estimate Jacobian (FEJ)~\cite{li2013high,geneva2020openvins}---offers a consistent alternative by exploiting relative constraints within a sliding window.
However, standard MSCKF implementations such as LINS~\cite{qin2020lins} still suffer from computational bottlenecks due to the state vector growing with the window size and the processing of massive LiDAR measurements.

To address these challenges, we propose a consistent and efficient LIO system tailored for resource-constrained platforms, \eg, UAVs. Our contributions are threefold:
\begin{itemize}
    \item To ensure consistency, we propose a tightly-coupled map-free MSCKF-based LIO. By marginalizing features via null-space projection, we avoid the over-confidence of ESIKF methods, yielding \textit{accurate and consistent} estimates.
    
    \item Aiming for computational efficiency, we design a parallel, adaptive voxel-based data association coupled with a novel, compact \textit{cluster-to-plane} measurement model. This formulation losslessly reduces the observation dimension by more than 85\%, effectively resolving the efficiency bottleneck of traditional LIO .
    
    \item We perform extensive validation in simulations and real-world flights, showing that our method achieves superior consistency compared to SOTA approaches and {\em real-time capability with extremely efficient memory usage}, even running on severely resource-constrained computers (\eg, NVIDIA Jetson TX2).
\end{itemize}

The remainder of this paper is organized as follows: 
After reviewing the related work, we present the system formulation in Section~\ref{sec:problem}. Section~\ref{sec:upd} details the proposed parallelized voxelization and efficient MSCKF update with inferred cluster-to-plane measurements. Extensive validation through simulations and real-world experiments is provided in Sections~\ref{sec:sim} and Section~\ref{sec:exp}. Finally, Section~\ref{sec:concl} concludes the proposed framework and discusses the future work.

\section{Related Work}\label{sec:related_works}

The field of LiDAR-Inertial Odometry (LIO) and SLAM has witnessed a proliferation of high-performance systems in recent years. These can be broadly categorized into optimization-based and filtering-based approaches.

The \textit{optimization-based paradigm}, which relies on iterative error minimization, includes seminal works like LOAM~\cite{zhang2014loam}. LOAM pioneered real-time odometry through efficient plane-edge feature matching. Building on this, LIO-SAM~\cite{shan2020lio} employs a factor graph to tightly fuse IMU pre-integration with keyframe scan-matching for enhanced robustness. More recently, SR-LIO~\cite{yuan2024sr} attempts to better handle motion distortion via sweep reconstruction. Other lightweight variations like CT-ICP~\cite{dellenbach2021cticp} extend these concepts to continuous-time trajectory estimation. FF-LINS \cite{ff-lins} proposed a frame-to-frame LiDAR-inertial state estimator under factor graph optimization framework meanwhile maintaining consistency.

Conversely, \textit{filtering-based methods} sequentially update the state estimate within a recursive framework, often prioritizing computational efficiency. LINS~\cite{qin2020lins} integrates IMU and LiDAR data using an Iterated Extended Kalman Filter (IEKF). The field was significantly advanced by the FAST-LIO family\cite{xu2021fast,xu2022fast,bai2022faster,zhu2024swarm}: FAST-LIO~\cite{xu2021fast} reformulated the Kalman gain to depend on the measurement dimension, and FAST-LIO2~\cite{xu2022fast} introduced the ikd-tree for efficient map management. To further reduce computational load, Faster-LIO~\cite{bai2022faster} employs parallel sparse incremental voxels, and iG-LIO~\cite{chen2024ig} leverages an incremental GICP-based frontend to accelerate the update process. VoxelMap~\cite{yuan2022efficient} further improves accuracy by using adaptive voxels to incrementally update local map planes with uncertainty modeling.

However, a critical limitation of many high-performance ESIKF systems (including ~\cite{xu2022fast, faster-lio2022ral, yuan2022efficient, chen2024ig}) is their failure to maintain a theoretically correct uncertainty measure. These methods typically treat the global map as a pre-built, fixed reference with independent noise. This assumption neglects the cross-correlations between the current state and the map, leading to ``over-confidence", where the estimated covariance fails to reflect the true error growth.

A recent line of work has focused explicitly on resolving this inconsistency. Methods like MINS~\cite{lee2023mins} and Puma-LIO~\cite{jiang2022lidar} achieve consistency by using null-space projection to reformulate point-to-plane residuals as relative constraints between frames, effectively removing the dependency on a global map.
The MSCKF framework, renowned for its efficiency and consistency in visual-inertial odometry, has also been adapted for LIO. LIC-FUSION~\cite{zuo2019lic} utilizes this framework but relies on single-frame feature extraction, leading to fragile result in sparse environments. Although recent works like MSC-LIO~\cite{zhang2024msc} and BA-LINS~\cite{tang2025ba} also employ MSCKF for consistent estimation, their data association typically involves expensive neighbor searches. This process leads to a computational bottleneck in large-scale environments, hindering their deployment on SWaP-constrained platforms. Our work addresses this specific gap by introducing an adaptive voxel-aided LIO with cluster-to-plane measurement model under MSCKF framework that ensures both consistency and extreme efficiency.

\section{System Overview} \label{sec:problem}

In this section, we briefly describe the problem formulation  within a sliding-window MSCKF framework, 
and provide an overview of the proposed consistent and efficient LIO system as shown in Fig.~\ref{fig:overview}.

\begin{figure}[t]
\centering
\includegraphics[width=0.48\textwidth]{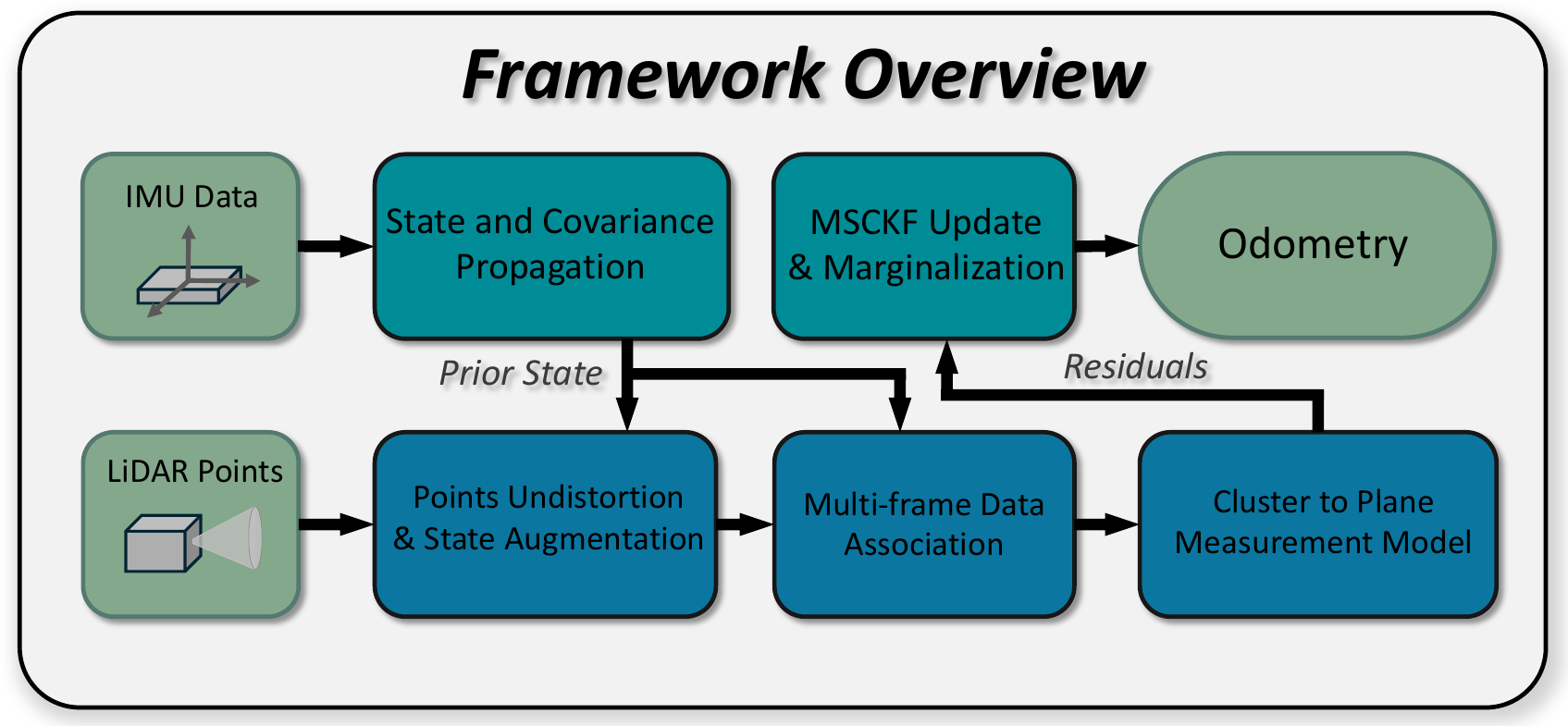} 
\caption{System diagram of the proposed MSCKF-based LIO.} 
\label{fig:overview}
    \vspace{-5mm}
\end{figure}

Within the efficient MSCKF framework~\cite{mourikis2007multi,geneva2020openvins}, IMU is used to propagate the state estimates and covariance, and LiDAR measurements are used for update.
The state vector of MSCKF framework includes $n$ cloned IMU poses $\mathbf x_{C_i}$ ($i=1,\cdots, n$) and the current IMU navigation states $\mathbf x_I$:

\begin{small}
\begin{align} \label{eq:state}
\mathbf x_k^T 
=& 
\begin{bmatrix}
\mathbf{x}_{C_1}^T & \cdots & \mathbf{x}_{C_n}^T & \mathbf x_{I_k}^T
\end{bmatrix}\\
=&
\begin{bmatrix}
\mathbf{x}_{C_1}^T & \cdots & \mathbf{x}_{C_n}^T &  
   ^G\mathbf{p}_{I_k}^T & ^G\mathbf{q}_{I_k}^T & ^G\mathbf{v}_{I_k}^T & \mathbf{b}_{g_k}^T & \mathbf{b}_{a_k}^T
\end{bmatrix} \notag
\end{align}
\end{small}%
\noindent where $\mathbf x_{C_i}^T=[^G\mathbf{q}_{C_i}^T,^G\mathbf{p}_{C_i}^T]$, represents the rotation and position corresponding to the $i$-th cloned IMU pose,
$^G\mathbf{p}_I, ^G\mathbf{q}_I, ^G\mathbf{v}_I$ are the position, rotation (unit quaternion) and velocity of current IMU state expressed in the global frame,
and $\mathbf{b}_g$ and $\mathbf{b}_a$ are the gyroscope  and accelerometer biases, respectively.


When a new IMU reading is available, we use the acceleration and angular velocity measurements to propagate the IMU navigation state based on the inertial kinematic model~\cite{trawny2005indirect}: 

\begin{small}
\begin{equation}\label{equ:state predict}
    \begin{aligned}
    {\mathbf{x}}_{k} = \mathbf f (\mathbf{x}_{k-1},\mathbf{a}_{m,k-1}, \boldsymbol{\omega}_{m,k-1}, \mathbf{n}_{k-1})
    \end{aligned}
\end{equation} 
\end{small}%
\noindent where $\mathbf{a}_m, \boldsymbol{\omega}_m$ is the measurement of acceleration and angular velocity, $\mathbf{n}$ is noise of the IMU measurement, which includes white Gaussian noise and random walk bias noise.

The state estimate propagates from  $k-1$ to $k$ is as follows:

\begin{small}
\begin{equation}\label{equ:state transition}
    \begin{aligned}
    {\hat{\mathbf{x}}}_{k|k-1} = \mathbf f (\hat{\mathbf{x}}_{k-1|k-1},\mathbf{a}_{m,k-1}, \boldsymbol{\omega}_{m,k-1}, \mathbf{0})
    \end{aligned}
\end{equation}
\end{small}

On the other hand, the covariance $\mathbf{P}_{k|k-1}$ is propagated based on the linearized IMU kinematic model of \eqref{equ:state predict}:

\begin{small}
\begin{equation}\label{equ:state covariance propagation}
    \mathbf{P}_{k|k-1}=\mathbf{\Phi}_{k-1}\mathbf{P}_{k-1|k-1}\mathbf{\Phi}_{k-1}^{\top}+\mathbf{Q}_{k-1}
\end{equation}
\end{small}%
\noindent where $\mathbf{\Phi}_{k-1}$ is the system Jacobian and $\mathbf{Q}_{k-1}$ is the discrete-time noise covariance.

When a new LiDAR pointcloud measurement becomes available, we utilize the IMU propagation to obtain the prior state estimate. To perform the MSCKF-based sliding window update, we clone the IMU pose $\mathbf x_{C_i}$ [see~\eqref{eq:state}] corresponding to the current LiDAR time  and augment the state and covariance (\ie, stochastic cloning).

Given the substantial volume of LiDAR data, directly processing raw points is computationally prohibitive. Therefore, we pass the accumulated point clouds and the augmented states to our proposed efficient state and covariance update with our compact cluster-to-plan constraints inferred from LiDAR measurements, as explained in next section.

\section{Efficient MSCKF Update with Inferred Cluster-to-Plane Constraints} \label{sec:upd}

To achieve highly efficient state estimation on SWaP-constrained UAVs, this section details our voxel-aided MSCKF pipeline. We first aggregate multi-frame point clouds in the sliding-window via parallel voxelization for planar patch extraction (Section \ref{sec:voxels}). Next, we derive a lossless cluster-to-plane model to significantly compress the observation dimensionality (Section \ref{sec:plane-constraints}). Finally, we utilize MSCKF null-space projection to marginalize out these features (Section \ref{sec:msckf_update}). This formulation yields rigorous geometric constraints while bypassing the need to maintain feature states, guaranteeing both computational efficiency and theoretical consistency.

\subsection{Plane Patches in Voxels}\label{sec:voxels}

Given a window of LiDAR cloud points, we first  efficiently and adaptively extract planes by leveraging voxels.
It is important to note that accurate plane fitting and tracking is not our purpose, whereas we use these plane patches as an intermediate step to help find proper point-on-plane measurements to constrain motion.
To this end, leveraging the prior IMU/LiDAR pose estimates $\{{^G}\mathbf p_{L_k}, {^G}\mathbf R_{L_k}\}$, we first transform the point cloud $\{ {^{L_k}}\mathbf p_i\}$ of each frame $\{L_k\}$ in the window to the global frame $\{G\}$: ${^{G}}\mathbf p_i = {^G}\mathbf p_{L_k} + {^G}\mathbf R_{L_k}  {^{L_k}}\mathbf p_i$.
%
%
Once all the LiDAR points of all the frames in the window are transformed into the global frame, 
we partition them into voxels with certain resolutions informed by operation conditions; for example, a fixed resolution of 3m for high-altitude flights  was found to be reasonable in our case. 
For fast retrieval, each $i$-th voxel is indexed by its integer grid coordinates, denoted by
$V_{i} = (x_{\text{idx}}, y_{\text{idx}}, z_{\text{idx}})$.
As a result, each LiDAR frame at time $k$ has a set of $N$ associated voxels (or voxel-map): 
$\mathcal{V}_k = \{ V_i, {^GP_i}, \mathbf C_i , {^{L_k}}P_i\}_{i=1}^N$, 
where ${^{L_k}}P_i$ is the point cloud of the $i$-th voxel and consists of the $n$ raw points ${^{L_k}}P_i = \{{^{L_k}}\mathbf p_j\}_{j=1}^n$ from the frame $k$ and the corresponding  transformed points into the global frame  $^GP_i = \{{^G}\mathbf p_j\}_{j=1}^n$,
along with  the  corresponding point cluster $\mathbf C_i$ which  is a $4\times 4$ symmetric matrix defined as~\cite{liu2023efficient}: 

\begin{small}
\begin{equation}
           \mathbf C_i = \sum_{j=1}^n
            \begin{bmatrix}
            ^G\mathbf{p}_j \\
            1
            \end{bmatrix}
            \begin{bmatrix}
            ^G\mathbf{p}_j^T & 1
            \end{bmatrix}
            =
            \begin{bmatrix}
            \mathbf{P} & \mathbf{v} \\
            \mathbf{v}^T & n
            \end{bmatrix}
            \in\mathbb{S}^{4\times4} 
            \label{eq:cluster}
\end{equation}
\end{small}%
\noindent where $\mathbf{P}=\sum_{j=1}^n {^G\mathbf{p}_j} {^G\mathbf{p}_j^T}$ and $\mathbf{v}=\sum_{j=1}^n {^G\mathbf{p}_j}$.


Now, for the voxel-map $\mathcal V_k$ at time $k$, we retrieve all the frames 
in the current sliding window  that contain the same voxel $V_i$,
and then aggregate the corresponding point clusters $\mathbf C_\ell$ from the matching voxels across frames:
$\mathbf{C}_i' = \mathbf{C}_i + \mathbf C_\ell$,
by leveraging the additive property of the point clusters [see \eqref{eq:cluster}].
 %
%
%
For the aggregated cluster $\mathbf{C}_i'$, we efficiently extract planar features via adaptive plane fitting for the ensuing construction of point-on-plane constraints.

Instead of probabilistic plane fitting (via least squares), 
we efficiently extract planes by performing eigen-decomposition of the $3\times 3$ covariance matrix of 3D point cloud of the $i$-th cluster $\mathbf C_i'$ of the voxel $V_i$ [see~\eqref{eq:cluster}]:

\begin{small}
\begin{align}
    {\rm cov} (\mathbf C_i')  \overset{\eqref{eq:cluster}}{=} \frac{1}{n}\mathbf{P}-{\frac{1}{n^{2}}}\mathbf{v}\mathbf{v}^{T} 
 \overset{{\rm eig.}}{=} \mathbf{U} \begin{bmatrix}\lambda_1&0 &0\\ 0& \lambda_2 &0 \\ 0 & 0 & \lambda_3\end{bmatrix} \mathbf{U}^\top
\label{eq:plane-detect}
\end{align}
\end{small}%
\noindent where the eigenvalues $\lambda_1 \geq \lambda_2 \geq \lambda_3$ and $\mathbf U$ contains the three eigenvectors.
Geometrically, a set of 3D points lying on a plane would have uncertainty of a 2D ellipse (instead of 3D ellipsoid) represented by their covariance. 
Therefore, a point cluster is planar if the least eigenvalue $\lambda_3$ is zero, while practically we choose $\lambda_3 < \tau \cdot \lambda_2$, where $\tau$ is a planar threshold that was set to 0.01 in our tests.
The plane parameters (\ie, closest point) can be determined from the eigenvector corresponding to the smallest eigenvalue and the centroid of the point cluster.
In case where a voxel is non-planar, we subdivide it into 8 sub-voxels,
and this process is recursively applied until the plane fitting is successful or the maximum depth is reached.

We further accelerate the above voxelization-based plane extraction with a dual-level (frame- and voxel-level) parallel strategy. 
At the frame-level, per-frame voxelization is independently processed for each frame, allowing for efficient handling of individual cells. 
At the voxel-level, cluster aggregation and plane fitting are performed simultaneously across all voxels within the latest frame. 
This parallelism ensures that the computational load scales linearly with the number of voxels, instead of  points.

\subsection{Cluster-to-Plane Measurements}\label{sec:plane-constraints}

Once we have obtained all the planes in the global frame 
$^G\mathbf{p}_{\pi}^{[i]}, \forall i$, from all the voxels of the current sliding window,
we then construct the following {\em point-on-plane} measurements for each of the voxel points at the current frame: 

\begin{small}
\begin{align}
z_{k,j}^{[i]} &= \frac{^G\mathbf{p}_{\pi}^{[i]^T}}{\|^G\mathbf{p}_{\pi}^{[i]} \|} ( \underbrace{ {^G}\mathbf p_{L_k} + {_{L_k}^G}\mathbf R  {^{L_k}}\mathbf p_j }_{{^{G}}\mathbf p_j} ) - \|^G\mathbf{p}_{\pi}^{[i]} \| \\
&= 
\underbrace{\begin{bmatrix}
^G\mathbf{p}_{\pi}^{[i]}/\|^G\mathbf{p}_{\pi}^{[i]}\| \\ -\|^G\mathbf{p}_{\pi}^{[i]}\|
\end{bmatrix}^T}_{\bm \pi^{[i]^T}}
\underbrace{\begin{bmatrix}
{_{L_k}^G}\mathbf R &  {^G}\mathbf p_{L_k}\\
\mathbf 0 & 1
\end{bmatrix} }_{{_{L_k}^G}\mathbf T}
\underbrace{\begin{bmatrix} {^{L_k}}\mathbf p_j \\ 1 \end{bmatrix} }_{{^{L_k}} {\mathbf {\bar p}_j}}
\label{eq:pt-plane}
\end{align}
\end{small}%
\noindent where the $j$-th 3D point  in the IMU/LiDAR local frame at the current time $k$, ${^{L_k}}\mathbf p_j$, 
is transformed into the global frame ${^{G}}\mathbf p_j$, 
and then projected onto the global voxel plane $\ ^G\mathbf{p}_{\pi}^{[i]}$.
If noise free, the above point-to-plane distance (residual)~\eqref{eq:pt-plane}  would be zero.
However, due to measurement noise and imperfect IMU/LiDAR poses, the point-on-plane uncertainty (\ie, variance $\sigma_{k,j}^{[i]^2}$) would be inevitable, which may be derived from the plane fitting or found empirically, 
\ie, $z_{k,j}^{[i]} \sim \mathcal N(0, \sigma_{k,j}^{[i]^2})$. 
To simplify the ensuing derivations,  we assume an isotropic unit variance.

Since  there are typically many points from the same voxel residing on the same plane $\bm \pi^{[i]}$,
processing all these point-on-plane constraints~\eqref{eq:pt-plane}, either in batch or sequential form, could be a significant overtaking. 
To address this issue,  we compress the $m$  point-on-plane  measurements via (thin) QR decomposition:

\begin{small}
\begin{align}
& \underbrace{
\begin{bmatrix} z_{k,1}^{[i]} & \cdots & z_{k,m}^{[i]}  \end{bmatrix}^T}_{\mathbf {z}_{k}^{[i]}} = \underbrace{ \begin{bmatrix} {^{L_k}} {\mathbf {\bar p}_1^T} \\ \vdots \\ {^{L_k}} {\mathbf {\bar p}_m^T} \end{bmatrix} }_{ \mathbf{\bar P}_i \overset{QR}{=} \mathbf Q_i \mathbf R_i }  {_{L_k}^G}\mathbf T^T \bm\pi^{[i]} + \mathbf n^{[i]}\\
&\Rightarrow~ \mathbf {z'}_{k}^{[i]} := \mathbf Q_i^T \mathbf {z}_{k}^{[i]}  = \mathbf R_i \ {_{L_k}^G}\mathbf T^T \bm\pi^{[i]} + \mathbf Q_i^T  \mathbf n^{[i]} \label{eq:QR}
\end{align}
\end{small}%
\noindent where $\mathbf{R}_i$ is a $4\times 4$ upper triangular matrix (thin QR). 
However, employing \eqref{eq:QR} requires explicitly constructing the stacked matrix $\bar{\mathbf{P}}_i \in \mathbb{R}^{m \times 4}$ before QR decomposition, incurring a memory footprint and dynamic allocation overhead that grows linearly ($\mathcal{O}(m)$) with the point count $m$. Since $m$ often reaches thousands in dense scans, repeating this $\mathcal{O}(m)$ construction across voxels causes severe memory consumption, detrimental to SWaP-constrained UAVs.

To resolve this, we leverage the compact point cluster matrix $\mathbf{C}_i \in \mathbb{S}^{4\times4}$ \eqref{eq:cluster}, naturally aggregated during voxel-wise plane fitting. Using $\mathbf{C}_i$, we replace the memory-intensive QR decomposition with a Cholesky decomposition of a fixed $4 \times 4$ matrix. This entirely circumvents the $\mathcal{O}(m)$ matrix construction, compressing the memory footprint to $\mathcal{O}(1)$. Furthermore, it reduces the decomposition complexity from $\mathcal{O}(m\times 4^2)$ \cite{mori2012backward} to a strict $\mathcal{O}(1)$ ($\sim$21 FLOPs), effectively decoupling update efficiency from point density $m$.

Therefore, we take full advantage of the voxels built in our system (see Section~\ref{sec:voxels}), and equivalently (in maximum likelihood estimation or MLE sense) transform the $m$ point-on-plane constraints into a single {\em cluster-to-plane} measurement based on the following lemma:
\begin{lem}\label{lemma:equivalence}

Performing EKF batch update with {\em multiple}  point-on-plane constraints (say $j=1,\cdots,m$)~\eqref{eq:pt-plane} 
associated with the $i$-th plane patch $\bm\pi^{[i]}$, 
is equivalent (up to noise characterization) 
to update with the following {\em one} cluster-to-plane measurement:

\begin{small}
\begin{equation}\label{equ:pointcluster to plane}
\mathbf {z''}_{k}^{[i]} = \mathbf{L}_{k}^{[i]^T} {_{L_k}^G}\mathbf T^T \bm{\pi}^{[i]} + {\mathbf n''}_k^{[i]}
\end{equation}
\end{small}%
\noindent where $^{L_k}\mathbf{C}_i
= \mathbf L_{k}^{[i]} \mathbf L_{k}^{[i]^T}$ 
is the Cholesky decomposition of the local point cluster [see \eqref{eq:cluster}].
\end{lem}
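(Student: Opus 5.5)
The plan is to show that the two updates see the same information: the same normal-equation (information) matrix and the same information vector about the state. For a linear(ized) Gaussian model this forces the EKF batch update to produce the same posterior mean and covariance, which is exactly the MLE equivalence claimed. Write $\mathbf x$ for the part of the state entering ${_{L_k}^G}\mathbf T$. The stacked point-on-plane model is
\begin{equation*}
\mathbf z_k^{[i]} = \mathbf{\bar P}_i\, {_{L_k}^G}\mathbf T^T \bm\pi^{[i]} + \mathbf n^{[i]},
\end{equation*}
with $\mathbf n^{[i]}\sim\mathcal N(\mathbf 0,\mathbf I_m)$ under the unit-variance assumption. Define $\mathbf h(\mathbf x) := {_{L_k}^G}\mathbf T^T \bm\pi^{[i]}\in\mathbb R^4$; then every residual $z_{k,j}^{[i]}$ depends on the state only through the common vector $\mathbf h$. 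The key algebraic identity follows directly from \eqref{eq:cluster}, written with the local points ${^{L_k}}\mathbf{\bar p}_j$:
\begin{equation*}
\mathbf{\bar P}_i^T \mathbf{\bar P}_i = \sum_{j=1}^m {^{L_k}}\mathbf{\bar p}_j\, {^{L_k}}\mathbf{\bar p}_j^T = {^{L_k}}\mathbf C_i = \mathbf L_k^{[i]}\mathbf L_k^{[i]^T}.
\end{equation*}

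Next I compare the two negative log-likelihoods as functions of $\mathbf h$. For the stacked model,
\begin{equation*}
\|\mathbf{\bar P}_i \mathbf h\|^2 = \mathbf h^T\, {^{L_k}}\mathbf C_i\, \mathbf h = \|\mathbf L_k^{[i]^T}\mathbf h\|^2 .
\end{equation*}
Hence the quadratic (Hessian) term of the cost is identical for both models. The same holds after linearization, since $\mathbf H_{\rm stack} = \mathbf{\bar P}_i \partial\mathbf h/\partial\mathbf x$ gives $\mathbf H_{\rm stack}^T\mathbf H_{\rm stack} = \mathbf H''^T\mathbf H''$ with $\mathbf H'' = \mathbf L_k^{[i]^T}\partial\mathbf h/\partial\mathbf x$. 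This also covers the null-space projection step later, because it acts on $\partial\mathbf h/\partial\bm\pi$ in the same factored way.

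The linear term needs the measured values to correspond. If $\mathbf{\bar P}_i=\mathbf Q_i\mathbf R_i$ is the thin QR of \eqref{eq:QR}, then $\mathbf R_i^T\mathbf R_i = \mathbf L\mathbf L^T$, so $\mathbf R_i^T = \mathbf L_k^{[i]}\mathbf D$ for a diagonal sign matrix $\mathbf D$. In this sense the Cholesky factor is the QR factor up to signs. I then define $\mathbf z''=\mathbf D\mathbf z'=\mathbf D\mathbf Q_i^T\mathbf z_k^{[i]}$ and $\mathbf n''=\mathbf D\mathbf Q_i^T\mathbf n^{[i]}$, which is still $\mathcal N(\mathbf 0,\mathbf I_4)$ because $\mathbf D\mathbf Q_i^T$ has orthonormal rows.

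The full cost then splits as
\begin{equation*}
\|\mathbf z-\mathbf{\bar P}_i\mathbf h\|^2 = \|\mathbf z''-\mathbf L^T\mathbf h\|^2 + \|(\mathbf I-\mathbf Q_i\mathbf Q_i^T)\mathbf z\|^2 .
\end{equation*}
The last term does not depend on the state, so the Kalman gain, posterior mean and posterior covariance coincide. In the filter the innovation is evaluated at the estimate, where the nominal measurement is zero; this means $\mathbf z''$ is simply $\mathbf L^T\hat{\mathbf h}$, computed without ever forming $\mathbf Q_i$.

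The main obstacle is justifying this use of the single $4\times4$ factor. It requires $\mathbf{\bar P}_i$ to have full column rank, so that ${^{L_k}}\mathbf C_i$ is positive definite and the Cholesky factor exists and is unique. For exactly coplanar points ${^{L_k}}\mathbf C_i$ is rank-deficient (rank 3). I would handle this either with a pivoted or semidefinite Cholesky, using a $4\times r$ factor with the same Gram identity, or by noting that real LiDAR noise keeps the matrix positive definite in practice. The qualifier "up to noise characterization" is meant to absorb both this caveat and the move from per-point variances to unit, isotropic noise; with weights $\sigma_j^{-2}$ the same argument goes through using the weighted cluster.
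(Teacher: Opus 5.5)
Your proof is correct and rests on the same key step as the paper's: the Gram identity $\mathbf{\bar P}_i^T\mathbf{\bar P}_i = {^{L_k}}\mathbf C_i = \mathbf L_k^{[i]}\mathbf L_k^{[i]^T}$ makes the stacked and compressed MLE costs coincide, and therefore also the Gauss--Newton/EKF normal equations. Your QR/sign-matrix argument for mapping the measured values and the noise, and your remark that exactly coplanar points make ${^{L_k}}\mathbf C_i$ rank-deficient, go beyond the paper, which simply treats the measured value as zero and does not discuss these issues, but both are consistent with it.
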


\begin{proof}
It is known that the (iterated) EKF update is equivalent to Gauss-Newton method for maximum likelihood estimation (MLE)~\cite{bell1993iterated}.
Hence, we transform the $m$ point-on-plane measurements from the MLE perspective as follows:
\begin{small}
\begin{align*}
\min_{\mathbf x_k,\bm\pi^{[i]}} 
&|| \mathbf x_k - \hat{\mathbf x}_{k|k-1} ||_{\mathbf P_{k|k-1}}^2 + || \mathbf z_k^{[i]} ||^2 \\
=& || \mathbf x_k - \hat{\mathbf x}_{k|k-1} ||_{\mathbf P_{k|k-1}}^2 +  
\bm\pi^{[i]^T} {_{L_k}^G}\mathbf T \left(\mathbf{\bar P}_i^T   \mathbf{\bar P}_i\right) {_{L_k}^G}\mathbf T^T \bm\pi^{[i]}\\
=& || \mathbf x_k - \hat{\mathbf x}_{k|k-1} ||_{\mathbf P_{k|k-1}}^2 +  
\bm\pi^{[i]^T} {{_{L_k}^G}\mathbf T} \left( ^{L_k}\mathbf{C}_i \right)  {{_{L_k}^G}\mathbf T^T} \bm\pi^{[i]}\\
\overset{Chol.}{=}& \scalemath{.95}{ || \mathbf x_k - \hat{\mathbf x}_{k|k-1} ||_{\mathbf P_{k|k-1}}^2 +  
\bm\pi^{[i]^T} {{_{L_k}^G}\mathbf T} \left(\mathbf L_{k}^{[i]} \mathbf L_{k}^{[i]^T} \right)  {{_{L_k}^G}\mathbf T^T} \bm\pi^{[i]} }\\
=& || \mathbf x_k - \hat{\mathbf x}_{k|k-1} ||_{\mathbf P_{k|k-1}}^2 +  
|| \underbrace{ \mathbf L_{k}^{[i]^T}   {{_{L_k}^G}\mathbf T^T} \bm\pi^{[i]} }_{{\mathbf z''}_k^{[i]}} ||^2
\end{align*}
\end{small}

Clearly, in the sense of MLE, we can equivalently perform EKF update with this inferred cluster-to-plain measurement $\mathbf z_k^{[i]}$ which is  of dimension only $4\times 1$, instead of the stacked point-on-plane measurements of $m\times 1$  where $m$ is typically in the hundreds or thousands.
\end{proof}

\subsection{MSCKF Update} \label{sec:msckf_update}

With the inferred small-size cluster-to-plane measurements $\mathbf {z''}_{k}^{[i]}$ derived from \eqref{equ:pointcluster to plane}, we proceed to update the sliding window states. The linearized residual for this measurement, corresponding to the planar feature $\bm\pi^{[i]}$, can be expressed as:

\begin{small}
\begin{equation} \label{eq:lidar_meas}
    \mathbf{r}_k^{[i]} = \mathbf{0} - \mathbf{z''}_k^{[i]} \simeq \mathbf{H}_{x,k}^{[i]} \tilde{\mathbf{x}}_{k} +\mathbf{H}_{\pi,k}^{[i]} \tilde{\bm\pi}_{k}^{[i]} + {\bm n}_k^{[i]}
\end{equation}
\end{small}%
\noindent where 
$\mathbf{H}_{x,k}^{[i]}$ and $\mathbf{H}_{\pi,k}^{[i]}$ are the Jacobians with respect to the sliding window states (poses) and the plane feature parameters, respectively. A key characteristic of the MSCKF framework is that feature parameters (planes) are not maintained in the state vector to ensure linear computational complexity w.r.t the map size. However, the residual in \eqref{eq:lidar_meas} explicitly depends on the feature error $\tilde{\bm\pi}_{k}^{[i]}$. To eliminate this dependency, we employ the null-space projection technique similar to the MSCKF-based VIO~\cite{mourikis2007multi}. We project \eqref{eq:lidar_meas} onto the null-space of the plane measurement Jacobian $\mathbf{H}_{\pi,k}^{[i]}$
(\ie, $\mathbf{U}^\top \mathbf{H}_{\pi,k}^{[i]} = \mathbf0$),
to create the feature-independent measurements for EKF update:

\begin{small}
\begin{align}
\mathbf{U}^\top \mathbf{r}_k^{[i]} &= \mathbf{U}^\top \mathbf{H}_{x,k}^{[i]} \tilde{\mathbf{x}}_{k} +\mathbf{U}^\top \mathbf{H}_{\pi,k}^{[i]} \tilde{\bm\pi}_{k}^{[i]} + \mathbf{U}^\top {\bm n}_k^{[i]} \notag\\
\Rightarrow~ \mathbf{r'}_k^{[i]} &= \mathbf{H'}_{x,k}^{[i]} \tilde{\mathbf{x}}_{k}+ {\bm n'}_k^{[i]}
\label{eq:p2p-meas}
\end{align}
\end{small}

This projected residual $\mathbf{r'}_k^{[i]}$ depends only on the state vector errors $\tilde{\mathbf{x}}_{k}$. By stacking these projected residuals from all valid planar features extracted in the current sliding window (Section \ref{sec:voxels}), we perform a standard EKF update to correct the poses and biases in the sliding window. This formulation provides two significant advantages: 1) Consistency: By marginalizing out features via null-space projection, we correctly handle the uncertainty without over-confident map assumptions. 2) Efficiency: The ``cluster-to-plane" model \eqref{equ:pointcluster to plane} reduces the measurement dimension via projection (from $m\times 1$ to $4\times 1 $ for each plane feature). This drastically reduces the cost of the null-space computation and the final Kalman update.

\begin{figure}[t]
\centering
 \includegraphics[width=0.95\columnwidth]{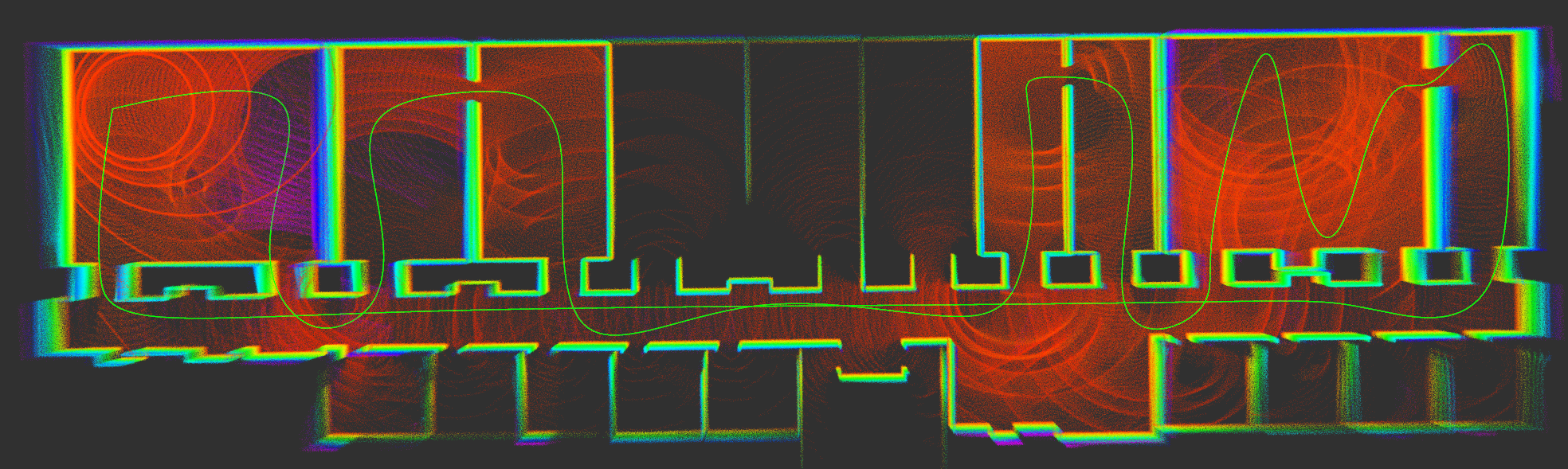} 
\caption{Visualization of simulated indoor environment and robot motion trajectory.}
\label{lips_sim_world}
  \vspace{-5mm}
\end{figure}

\section{Monte-Carlo Simulations} \label{sec:sim}
To effectively validate the consistency and accuracy of the proposed LIO, we perform Monte Carlo simulations.
To this end, we build an indoor environment using the open-source simulation platform~\cite{geneva2018lips}. 
The simulation scene is visualized in Fig. \ref{lips_sim_world}, and the simulation setup parameters are listed in Table~\ref{tab:lips_sim_param}.

\begin{table}[tbp]
\setlength{\tabcolsep}{10pt} 
\setlength{\extrarowheight}{5pt}
\caption{Simulation setup parameters.}
\vspace{-1em}
\begin{center}
\scalebox{0.9}{
    \begin{tabular}{c c c c c c}
       \toprule
      Parameter  & Value  & Parameter  & Value\\ \hline
        IMU Freq.(hz)   & 250   & Traj. Length(m) &  182   \\
        LiDAR Freq.(hz) & 10    & LiDAR Noise &  0.03   \\ 
        Gyro Noise  & 0.005   & Gyro Bias Init &  0.01   \\
        Gyro Rand. Walk  & 4e-6 & Acc Bias Init & 0.1 \\  
        Acc Noise & 0.01 &   LiDAR Rings Num. & 8   \\ 
        Acc Rand. Walk &2e-4  & LiDAR Ver. Res.($^\circ$) &  3   \\  
        Gravity & 9.81 & LiDAR Hor. Res.($^\circ$)  & 0.25   \\  \hline
    \end{tabular}
    }
\end{center}
\label{tab:lips_sim_param}
  \vspace{-3mm}
\end{table}

We compare our proposed method with the SOTA baselines on the performance metrics including 
Absolute Pose Error (APE) (measured in percent for translation and in degrees per meter for rotation) for accuracy and averaged Normalized Estimation Error Squared (NEES) for consistency~\cite{bar2001thiagalingam}. These baselines include the widely-adopted \textbf{FAST-LIO2}\cite{xu2022fast} and \textbf{VoxelMap}\cite{yuan2022efficient}, \textbf{iG-LIO}\cite{chen2024ig} representing recent high-efficiency LIO systems, \textbf{Super-LIO}\cite{wang2026super}, a recently proposed LIO system published in 2026 achieving extreme computational speed, and \textbf{FF-LINS}\cite{ff-lins} which is known as a consistent LiDAR-inertial state estimator.

\begin{table}[tbp]
\setlength{\tabcolsep}{10pt} 
\setlength{\extrarowheight}{5pt}
\caption{Monte Carlo simulation results.}
\vspace{-1em}
\begin{center}
\scalebox{0.9}{
    \begin{tabular}{c c c c c c}
        \toprule
      Method  & Trans. Error (\%)  & Rot. Error ($^\circ$/m)  & Avg. NEES     \\ \hline
      FAST-LIO2  &  0.51           &  0.0120                  &  8.70 $\times$ $10^{5}$    \\ 
      VoxelMap   &  0.22           &  0.0058                  &  1.06 $\times$ $10^{6}$    \\ 
      iG-LIO     &  0.38           &  0.0115                  &  1.05 $\times$ $10^{6}$    \\
      Super-LIO  & 0.50            &  0.0100                  &  1.42 $\times$ $10^{5}$    \\
      FF-LINS    & \textbf{0.20}            &  0.0019                  &  15.96           \\
      Ours       & 0.22   &  \textbf{0.0012}         &  \textbf{6.70}          \\ \hline
    \end{tabular}
    }
\end{center}
\label{tab:lips_sim_test}
  \vspace{-2em}
\end{table}

\begin{figure}[tbp]
\setlength\abovecaptionskip{-0.1\baselineskip}
\centering
\includegraphics[width=1.0\columnwidth]{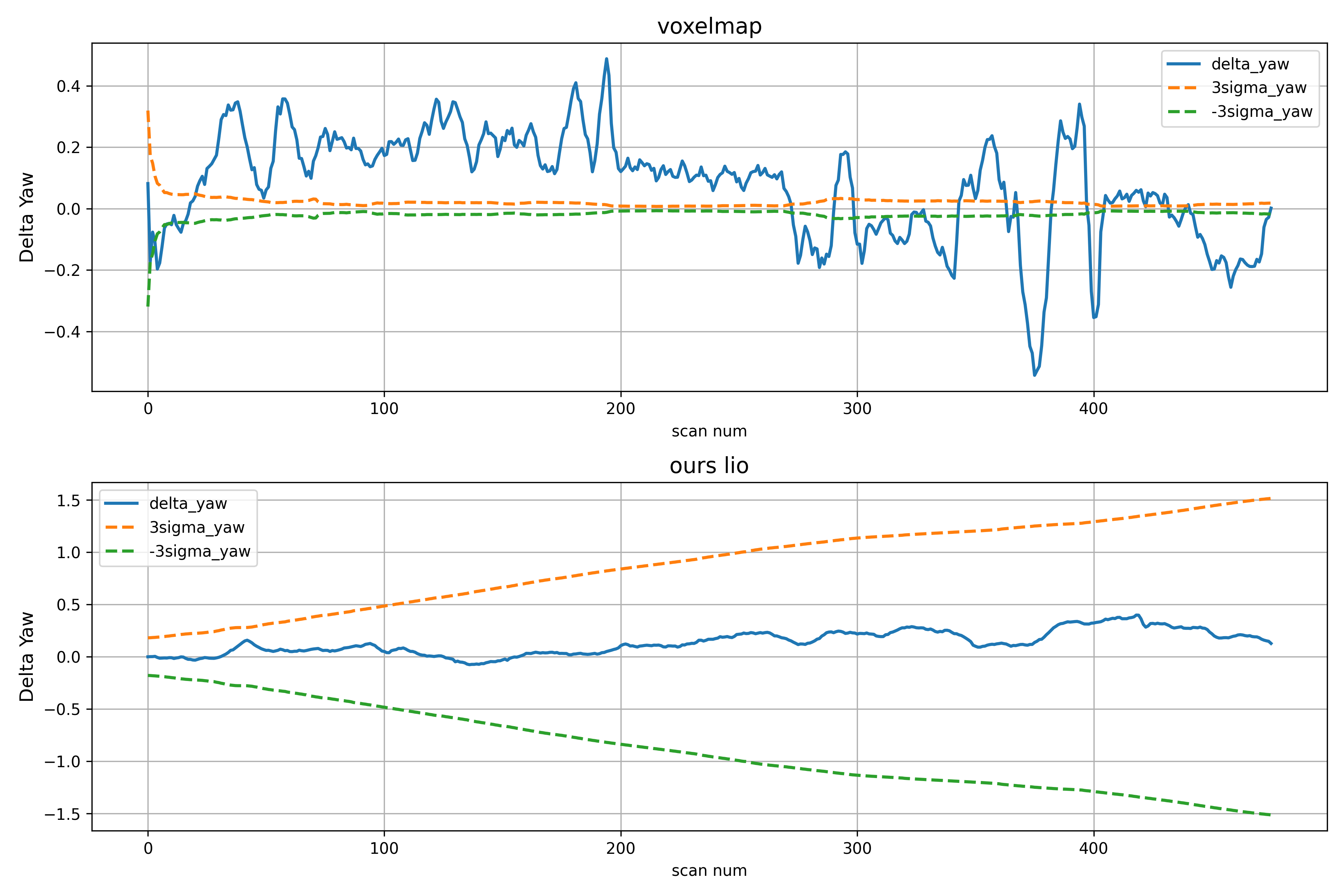}
\caption{Comparison of yaw estimation error and estimated 3-sigma bounds: (top) VoxelMap, (bottom) our method.}
\label{fig:yaw compare}
  \vspace{-4mm}
\end{figure}

Totally 20 Monte Carlo simulations are conducted, where the results are presented in Table~\ref{tab:lips_sim_test}. 
From these results, it is evident that while the proposed method achieves localization accuracy comparable to SOTA baselines, it significantly outperforms them in terms of average NEES (where the ideal value is 6.0 in this test). 
Specifically, FAST-LIO2, VoxelMap, iG-LIO, and Super-LIO exhibit severe over-confidence, yielding extremely large NEES values because they neglect the inherent map uncertainty and the cross-correlations between states and the geometric features. 
In contrast, FF-LINS, which employs an Factor Graph Optimization (FGO)-based framework with frame-to-frame data association, maintains a relatively better uncertainty envelope with an average NEES of 15.96. 
However, our proposed method achieves an average NEES of \textbf{6.7}, which is the closest to the ideal value among all tested methods. This indicates that our MSCKF-based formulation maintains a generally correct order of magnitude for the covariance estimate, providing the most realistic assessment of system uncertainty.

To further verify the consistency, we visualize the yaw error and 3$\sigma$ bounds estimated by the compared methods, as shown in Fig.~\ref{fig:yaw compare}. Our method demonstrates a realistic growth in uncertainty that correctly envelopes the yaw error, consistent with the inherent drift of odometry systems. Conversely, VoxelMap produces over-confident bounds that do not reflect the true error accumulation. Results for FAST-LIO2, iG-LIO, and Super-LIO are omitted due to their similar over-confidence issues. Since the consistent baseline FF-LINS yields an uncertainty profile similar to ours, its curve is also omitted to maintain the clarity of the comparison.

\begin{figure}[tbp]
\centering
\includegraphics[width=0.9\columnwidth]{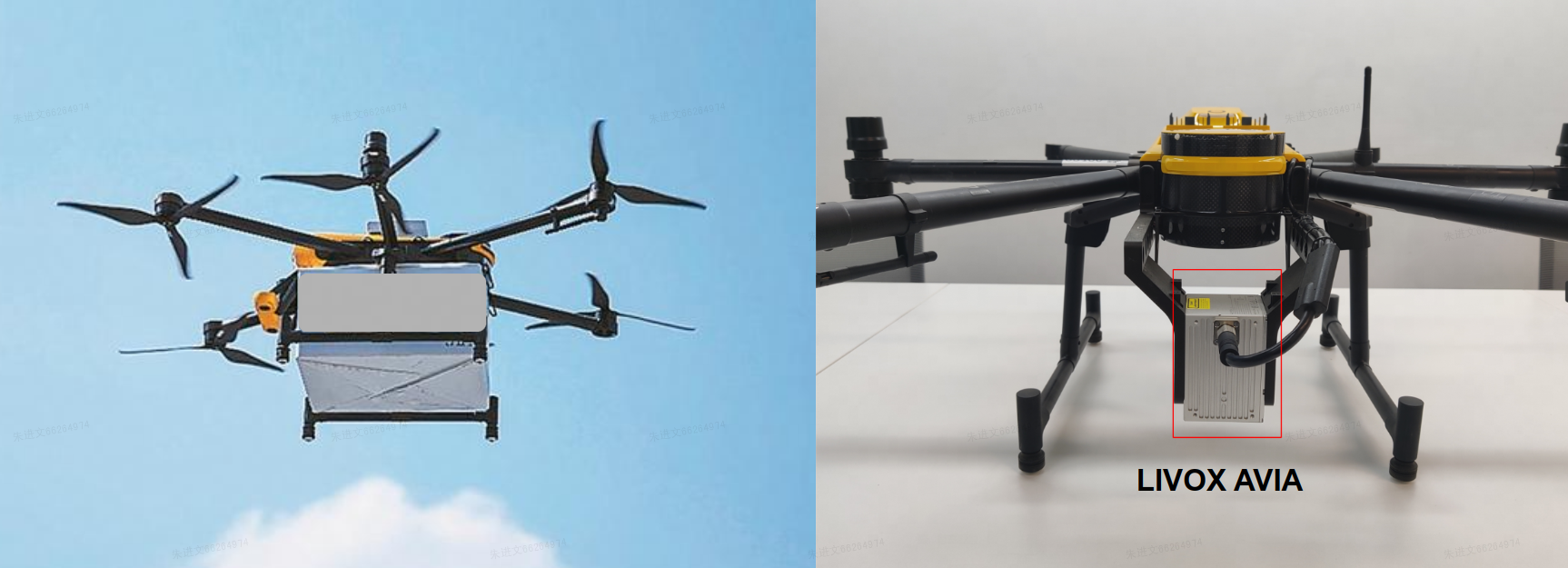}
\caption{The vehicle platform equipped with a down-facing LiDAR and RTK-GPS (providing ground-truth).}
\label{fig:uav_lidar}
  \vspace{-3mm}
\end{figure}

\section{Real-world Experiments}
\label{sec:exp}
Real-world validation was conducted using a hexacopter equipped with a downward-facing Livox Avia LiDAR and RTK-GPS for ground-truth (Fig.~\ref{fig:uav_lidar}). Six sequences were captured across sparse (\textit{seq1}), forested (\textit{seq2-4, 6}), and urban (\textit{seq5}) environments, as shown in Fig.~\ref{fig:uav_lio scene}. 
Following VoxelMap \cite{yuan2022efficient} configurations, we used full point clouds with 3m voxels (3-layer octree) and a 10-frame sliding window. Performance, measured by length-normalized APE, was benchmarked against FAST-LIO2 \cite{xu2022fast}, iG-LIO \cite{chen2024ig}, Super-LIO \cite{wang2026super}, and FF-LINS \cite{ff-lins} on an Intel i7-11700 PC (8 threads, 32GB RAM).


\begin{figure}[t]
\centering
 \includegraphics[width=0.95\columnwidth]{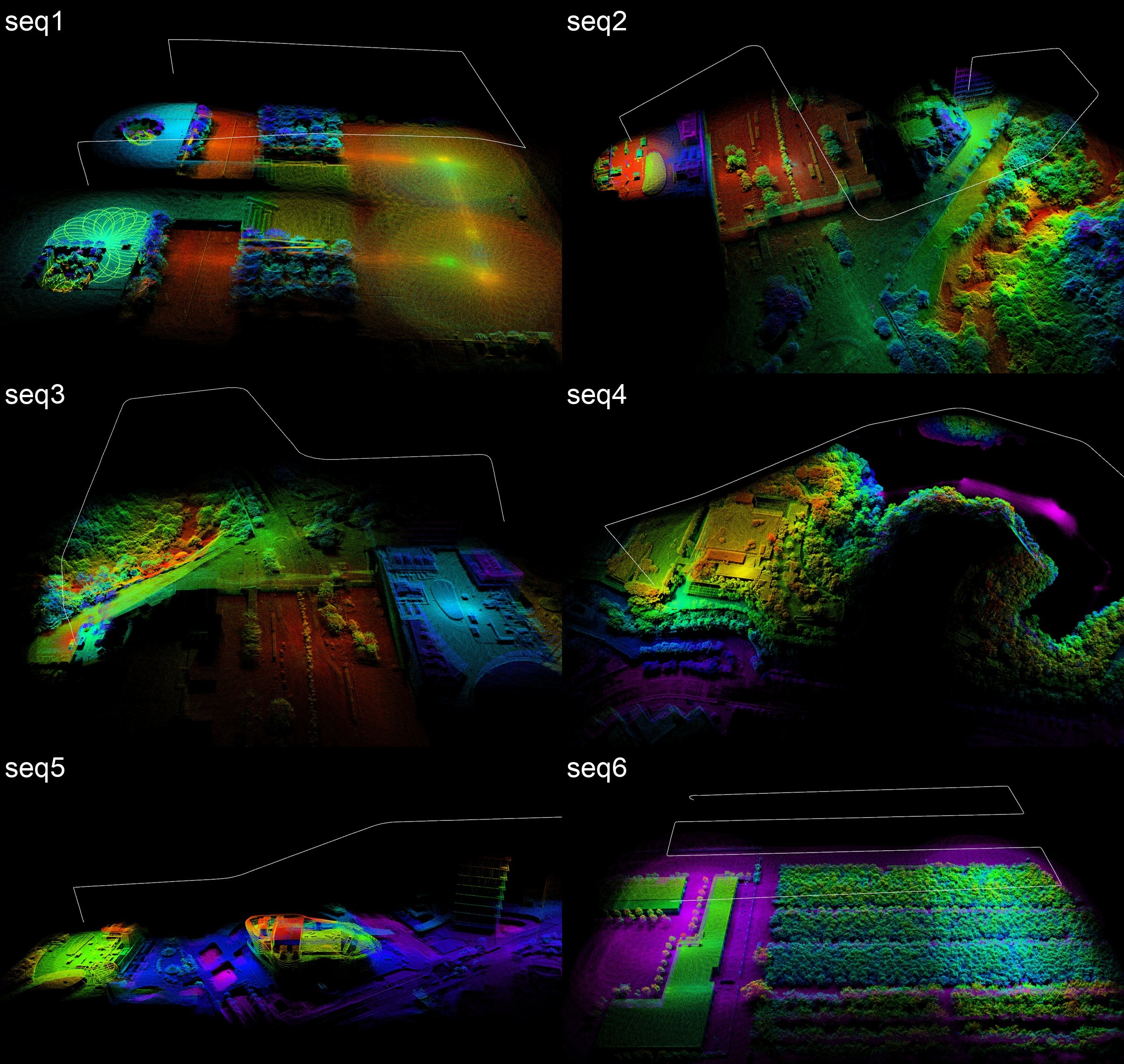} 
\caption{Different real-world flight trajectories and scenes.}
\label{fig:uav_lio scene}
  \vspace{-5mm}
\end{figure}

\begin{table*}[tbp]
\centering
\renewcommand\arraystretch{1.3}
    \caption{Accuracy and Consistency Comparison}
\vspace{-0.5em}
    \scalebox{0.83}{
    \begin{tabular}{p{0.4cm}<{\centering} p{0.7cm}<{\centering} c c c c c c c c c c }
        \toprule
      \multirow{2}*{Data}  & \multirow{2}*{Len.(m)}  & \multicolumn{2}{c}{FAST-LIO2}          & \multicolumn{2}{c}{iG-LIO}       & \multicolumn{2}{c}{Super-LIO}         & \multicolumn{2}{c}{FF-LINS}    & \multicolumn{2}{c}{Ours}   \\ 
              &            &   Trans. / Rot.  &  NEES               & Trans. / Rot.     &  NEES          & Trans. / Rot.  & NEES              & Trans. / Rot. & NEES                              & Trans. / Rot.       & NEES           \\\hline
      seq1  &    424.60   &  $\bm \times$  &$\bm \times$           &$\bm \times$   &$\bm \times$      & 21.173/0.0060  & 1.076$\times10^{10}$ &3.8813/0.0030  &6.197$\times10^3$                  & 1.1098/0.0087    &\textbf{4.131}$\times\bm {10^3}$  \\
      seq2  &    506.40   & 0.4346/0.0020  &7.279$\times10^6$      &0.5559/0.0012  &2.654$\times10^8$ & 0.5662/0.0014  & 1.201$\times10^7$    &0.2874/0.0014  &\textbf{6.050}$\times\bm {10^2}$   & 0.6270/0.0013    &9.072$\times 10^2$   \\
      seq3  &    480.75   & 1.5189/0.0037  &4.547$\times10^7$      &0.5429/0.0012  &4.795$\times10^8$ & 0.6917/0.0026  & 4.210$\times10^7$    &0.1992/0.0012  &\textbf{2.757}$\times\bm {10^2}$   & 0.5682/0.0014    &5.860$\times 10^2$   \\
      seq4  &    498.01   & 0.5829/0.0020  &1.620$\times10^7$      &0.4063/0.0017  &4.022$\times10^8$ & 0.2583/0.0028  & 9.461$\times10^6$    &0.3485/0.0022  &7.159$\times10^2$                  & 0.7504/0.0019    &\textbf{7.057}$\times\bm {10^2}$ \\ 
      seq5  &    551.50   & 0.5827/0.0046  &7.826$\times10^7$      &0.2359/0.0013  &5.786$\times10^7$ & 0.2645/0.0020  & 4.892$\times10^6$    &0.4984/0.0014  &\textbf{2.211}$\times\bm {10^2}$   & 0.5276/0.0014    &2.637$\times10^2$ \\ 
      seq6  &    588.57   & 1.9218/0.0152  &9.353$\times10^7$      &0.1639/0.0012  &2.818$\times10^7$ & 0.2755/0.0022  & 5.263$\times10^6$    &0.1197/0.0013  &6.713$\times10^2$                  & 0.4884/0.0019    &\textbf{4.606}$\times\bm {10^2}$ \\ \hline
    \end{tabular}
}
    \begin{tablenotes}
        \footnotesize
        \item 
        $\times$ denotes that the method failed to complete the corresponding sequence due to excessive odometry drift.
     \end{tablenotes}
\label{tab:uav_data_test}
  \vspace{-5mm}
\end{table*}

\subsection{Accuracy and Consistency Evaluation}
The quantitative results of the flight tests, including translation errors (measured in percent) and rotation errors (measured in degrees per meter), and average NEES, are summarized in Table \ref{tab:uav_data_test}.

\subsubsection{Accuracy and Robustness Analysis} 
The results in Table \ref{tab:uav_data_test} highlight a fundamental distinction between the compared frameworks in terms of robustness under environmental degeneracy. Notably, in \textit{seq1}, the drone traversed a large open square with sparse geometric features, a typical degenerate scenario for LiDAR-based odometry. In this environment, map-based methods (\eg, FAST-LIO2, iG-LIO, Super-LIO) exhibit significant performance degradation. Since these methods treat the global map as a fixed prior, even a minor initial drift during the feature-sparse period causes subsequent scans to be ``forcefully'' matched against an increasingly erroneous map. This creates a destructive feedback loop, leading to the tracking divergence of FAST-LIO2 and iG-LIO ($\bm \times$) and a substantial translation error in Super-LIO (21.17\%). 

In contrast, our proposed MSCKF framework and the consistent FF-LINS avoid this pitfall by not relying on a drifting global map, thereby maintaining a acceptable translation error (3.88\%). Our method successfully completes the trajectory with the lowest translation error of 1.11\%, demonstrating superior robustness in such degenerate scenarios. In feature-rich environments (\textit{seq2} to \textit{seq6}), the localization accuracy of our method is on par with SOTA algorithms, with translation errors ranging between 0.4\% and 0.7\% and rotation errors between 0.0013 and 0.0019$^\circ$/m. 

\subsubsection{Consistency Analysis} 
To quantitatively assess estimation consistency, we compute the average NEES for all methods, utilizing high-precision RTK-GPS as ground-truth. The results are summarized in Table \ref{tab:uav_data_test}.
In real-world aerial experiments, unmodeled factors such as extrinsic calibration residuals, time-synchronization errors, and non-Gaussian sensor noise inevitably cause NEES values to deviate from the theoretical ideal. Nevertheless, our method achieves NEES values on the order of $10^2\sim10^3$, which is within the same magnitude as the consistent LIO framework, FF-LINS. In sharp contrast, FAST-LIO2, iG-LIO, and Super-LIO exhibit significantly inflated NEES values ($10^6\sim 10^{10}$), indicating severe over-confidence.

This orders-of-magnitude difference confirms that our map-free MSCKF framework effectively mitigates covariance underestimation, providing a more realistic uncertainty measure. Theoretical consistency is crucial for the physical robustness of the estimator: it ensures that the filter correctly weights the IMU-driven state prediction and the LiDAR-derived observations. Unlike over-confident map-based filters that ``forcefully'' align current scans to noisy or potentially misaligned priors—thereby inducing spurious attitude corrections and hindering gyroscope bias convergence—our method maintains a proper covariance envelope. By avoiding these erroneous updates, our framework ensures precise orientation tracking and robustly suppresses cumulative drift in roll and pitch, even throughout challenging, high-dynamic aerial trajectories.

\subsection{Computational Efficiency and Memory Usage Evaluation}

\begin{table}[tbp]
\centering
\renewcommand\arraystretch{1.3}
    \caption{Time Consumption Comparison (ms)}
\vspace{-0.5em}
    \scalebox{0.9}{
    \begin{tabular}{c c c c c c }
        \toprule
      Data      & FAST-LIO2   & iG-LIO      & Super-LIO         & FF-LINS        & Ours       \\ \hline
      seq1      &$\bm \times$ &$\bm \times$ &   \textbf{11.78}  & 113.01        &   \textit{18.11}     \\
      seq2      &   62.19     &   36.65     &   \textbf{17.28}  & 45.69         &   \textit{27.09}     \\
      seq3      &   73.79     &   38.27     &   \textbf{16.83}  & 40.22         &   \textit{29.70}     \\
      seq4      &   60.75     &   37.33     &   \textbf{17.16}  & 51.25         &   \textit{27.07}     \\
      seq5      &   41.38     &   25.42     &   \textbf{12.66}  & 27.35         &   \textit{19.88}     \\
      seq6      &   60.64     &   25.31     & \textbf{11.08}    & 37.08         &   \textit{19.94}     \\
        \hline
    \end{tabular}
}
\label{tab:efficiency}
\end{table}

\begin{table}[tbp]
\centering
\renewcommand\arraystretch{1.3}
    \caption{Memory Usage Comparison (MB)}
\vspace{-0.5em}
    \scalebox{0.9}{
    \begin{tabular}{c c c c c c }
        \toprule
      Data      & FAST-LIO2     & iG-LIO            & Super-LIO    & FF-LINS   & Ours         \\ \hline
      seq1      &  $\bm \times$ & $\bm \times$      & 162.082      & 728.469   & \textbf{103.41}   \\
      seq2      &   623.105     & 1009.05           & 245.621      & 570.152   &   \textbf{124.58}     \\
      seq3      &   1785.94     & 887.270           & 231.800      & 564.863   &   \textbf{107.40}     \\
      seq4      &   576.703     & 925.328           & 233.629      & 550.770   &   \textbf{99.410}      \\
      seq5      &   505.895     & 984.051           & 245.211      & 561.086   &   \textbf{106.45}     \\
      seq6      &   340.727     & 470.863           & 144.398      & 541.199   &   \textbf{106.77}     \\ \hline
    \end{tabular}
}
\label{tab:memory}
  \vspace{-5mm}
\end{table}

We evaluate the computational efficiency and resource consumption of the proposed method by comparing it against SOTA baselines across various datasets. 

\subsubsection{Runtime Analysis}
Table~\ref{tab:efficiency} summarizes the average processing time per scan. In terms of efficiency, our method significantly outperforms iG-LIO and FF-LINS, achieving a 2$\times$ to 3$\times$ speedup over FAST-LIO2.
The performance bottleneck for these baseline methods primarily stems from a measurement dimensionality that is directly coupled with point cloud density. FF-LINS, in particular, is further burdened by the high computational overhead of its Factor Graph Optimization (FGO) framework. In contrast, the efficiency of our approach is rooted in \textbf{parallel voxelization} and the compact \textbf{cluster-to-plane} measurement model. While the computational load for existing systems scales linearly with point volume during residual and Jacobian derivations, our update operation is strictly bounded by the number of extracted planar features (typically in the hundreds), effectively decoupling computational cost from raw point density.

We acknowledge that our runtime is slightly higher than Super-LIO. This difference arises because Super-LIO employs the OctVox structure to enforce strict density control (\ie, aggressive downsampling) and utilizes a heuristic-guided KNN strategy (HKNN) to accelerate search~\cite{wang2026super}. However, such aggressive downsampling poses risks for high-altitude UAV flights, where scanned ground points are inherently sparse. In these scenarios, further discarding points can lead to a loss of critical geometric constraints, potentially increasing state estimation errors. Consequently, our method prioritizes robustness by processing the full input point cloud without downsampling to ensure a lossless LIO system. Despite processing significantly more data, our approach remains highly competitive and fully satisfies real-time requirements, with an average processing time below 30 ms per frame—well below the 100 ms real-time threshold (for 10Hz LiDAR).

Lastly, to demonstrate the computational efficiency of our proposed method on severely resource-constrained edge platforms, we also conducted evaluations on a legacy NVIDIA Jetson TX2. Even on this limited onboard hardware, our system achieves real-time performance, maintaining an average processing time of \textbf{86.2 ms} per frame.

\subsubsection{Memory Usage Analysis}
Memory usage is a critical constraint for onboard UAV processors with limited resources. As shown in Table~\ref{tab:efficiency}, our method achieves the \textbf{lowest memory usage} among all compared methods, consistently maintaining a stable usage of approximately 100 MB regardless of trajectory duration.

This superior efficiency stems from the combination of our map-free architecture and a lightweight sliding window representation. While FAST-LIO2, iG-LIO, and Super-LIO rely on maintaining an explicit global map (\eg, ikd-tree or voxel-map) that can consume between 200 MB and 1 GB, our approach marginalizes out historical features via null-space projection. 
Crucially, our method also significantly outperforms the map-free FF-LINS in memory conservation. Although FF-LINS avoids a global map, its sliding window stores ``keyframes'' that are actually dense submaps constructed by aggregating multiple LiDAR scans. In contrast, each entry in our sliding window is a single raw LiDAR scan. Consequently, even with an identical window size, our system maintains a much leaner memory profile than FF-LINS, making it uniquely suited for SWaP-constrained aerial platforms.

\subsection{Ablation Study}
To quantitatively evaluate the individual contributions of the proposed voxel-based parallel acceleration strategy and the cluster-to-plane measurement model, we conducted a comprehensive ablation study. We designed three variants of the system to analyze the time consumption of key modules:
\begin{itemize}
\item \textbf{Ours}: The complete proposed framework equipped with both the compact cluster-to-plane constraints and multi-threaded voxelization and data association (8 threads).
\item \textbf{Ours}$^{\wedge}$: The proposed cluster-to-plane model implemented in a single-threaded manner.
\item \textbf{Ours}$^*$: Replaces the cluster-to-plane model with the conventional point-to-plane model (processing all points individually) while keeping multi-threading enabled.
\end{itemize}

We decomposed the average time consumption per frame into three critical functional modules and a residual 'Others' category. Specifically, the \textit{Pre-process} module encompasses prediction, point undistortion, and state augmentation; \textit{Data Association} involves voxelization and adaptive plane fitting; and \textit{State Update} comprises residual and Jacobian calculation, null-space projection, and the MSCKF update. Remaining auxiliary overheads are grouped into Others. The detailed runtime breakdown across six diverse sequences is illustrated in Fig. \ref{fig:ablation_study}.

\begin{figure}[t]
\centering
 \includegraphics[width=1.0\columnwidth]{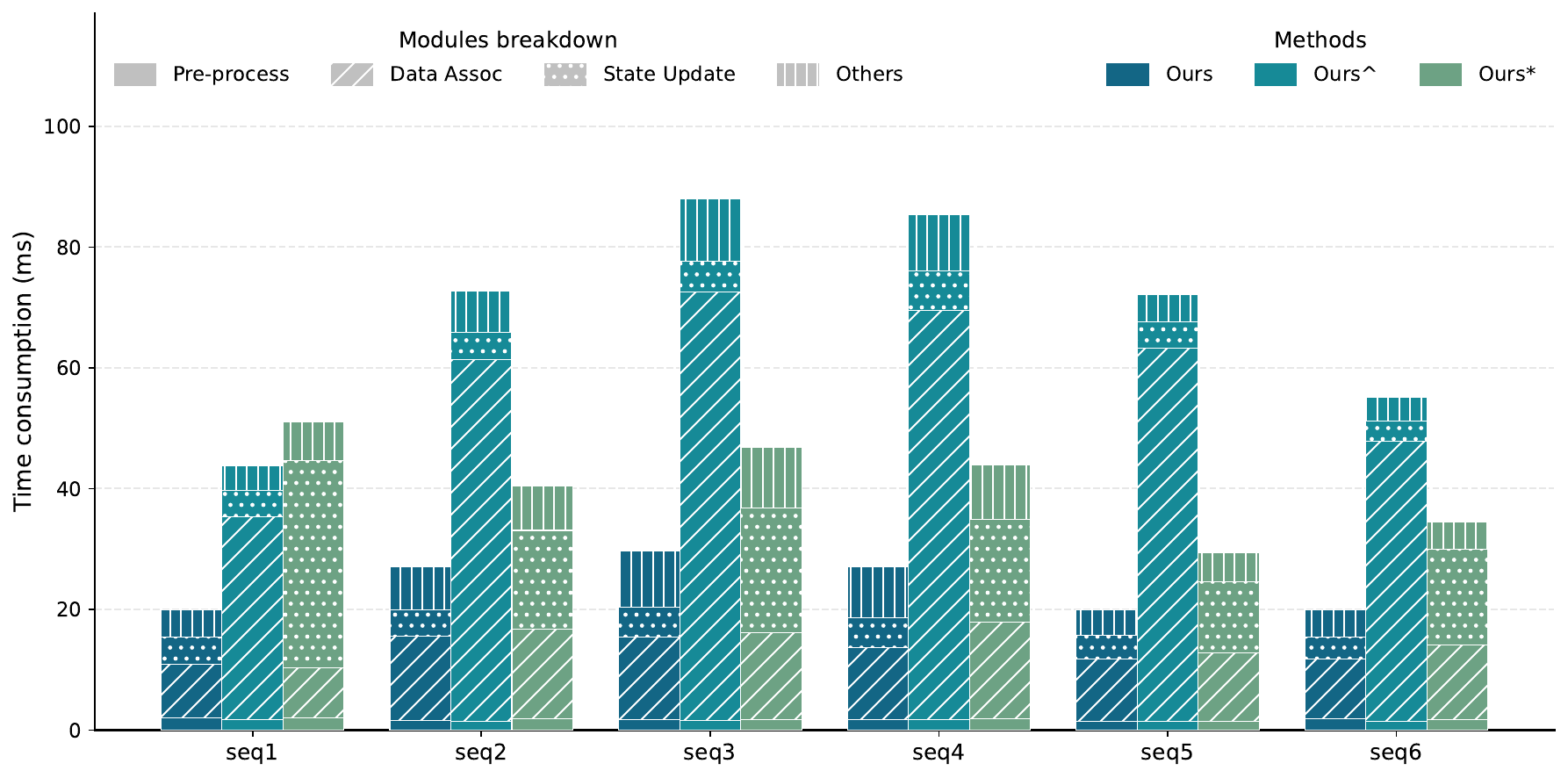} 
\caption{Module-level runtime breakdown of the ablation variants across six evaluation sequences.}
\label{fig:ablation_study}
  \vspace{-5mm}
\end{figure}

\subsubsection{Impact of Parallel Strategy}
The effectiveness of the voxel-based parallel strategy is evident by comparing the single-threaded groups (Ours$^{\wedge}$) with their multi-threaded counterparts (Ours).
The \textit{Data Association} part is computationally intensive due to the traversal of massive point clouds and voxels.
The single-threaded implementation (Ours$^{\wedge}$) often creates a bottleneck, pushing the total frame time over 80 ms in complex scenarios, \eg, \textit{seq3} and \textit{seq4}.
By leveraging 8-thread parallelism, the proposed method (Ours) significantly reduces the time consumption of the voxelization and plane-fitting modules by a factor ranging from \textbf{4$\times$ to 6$\times$}, ensuring the system remains well within real-time constraints.

\subsubsection{Impact of Point-Cluster Measurement Model}
By comparing Ours with Ours$^*$, we observe the efficiency gain brought by the observation dimensionality reduction.
As shown in the \textit{State Update} module of the bars in Fig. \ref{fig:ablation_study}, the traditional point-to-plane model (Ours$^*$) incurs a heavy computational load during the EKF update because the dimension of the residual vector scales linearly with the number of valid points (typically more than thousands).
To quantify this, we calculated the average number of planar features $N_{pl}$ and planar points $N_{pt}$ per frame across all test data. The statistics reveal that the average $N_{pt}$ (11,835) is approximately two orders of magnitude larger than the average $N_{pl}$ (429). 
Consequently, the effective measurement dimension is drastically reduced from $N_{pt}$ (in point-to-plane mode) to $4 \times N_{pl}$ (in cluster-to-plane mode), achieving a \textbf{dimensionality reduction of over 85$\%$}.
This compact formulation significantly reduces the dimension of the Jacobian matrix and the computational cost of the null-space projection, resulting in a \textit{State Update} speedup ranging from \textbf{3$\times$ to 8$\times$} across different sequences. In particular, the acceleration is most pronounced in \textit{seq1}. This sequence features structured environments with extensive large-area planes (\eg, walls and floors), resulting in a high density of points per planar feature. Consequently, our cluster-based representation achieves the maximal degree of dimensionality reduction in this scenario, thereby yielding the highest computational gain.

\section{Conclusion and Future Work}
\label{sec:concl}

In this paper, we presented a consistent and efficient LiDAR-Inertial Odometry system specifically tailored for SWaP-constrained UAVs. By establishing multi-frame geometric constraints within an MSCKF framework, our method ensures theoretical estimator consistency, effectively addressing the over-confidence issue prevalent in ESIKF-based approaches. Furthermore, we resolved the computational bottleneck of traditional MSCKF by introducing a parallel voxelization and data association and a novel lossless cluster-to-plane measurement model, which drastically reduces observation dimensionality. Extensive validations in both simulations and challenging real-world aerial scenarios demonstrate that our approach achieves accuracy comparable to SOTA methods while delivering superior performance in terms of estimation consistency and computational efficiency.

For future work, our primary goal is to further boost the algorithm's efficiency to achieve super-real-time performance, making it even more suitable for drone platforms with limited computational power. 
This will involve exploring techniques such as keyframe selection to reduce the sliding window length. 
Additionally, we plan to enhance long-term robustness by incorporating persistently tracked planar features directly into the state vector, enabling more resilient data association over extended trajectories.

\bibliographystyle{IEEEtran}
\bibliography{paper}

\end{document}